\newtheorem{theorem}{Theorem}
\newtheorem{definition}{Definition}
\newtheorem{proposition}[theorem]{Proposition}
\begin{document}
\title{Projection-Based Correction for Enhancing Deep Inverse Networks}

\author{Jorge Bacca}

\address{Department of Computer Science, Universidad Industrial de Santander, Bucaramanga, Colombia}

\ead{jbacquin@uis.edu.co}

\begin{abstract}
Deep learning-based models have demonstrated remarkable success in solving ill-posed inverse problems; however, many fail to strictly adhere to the physical constraints imposed by the measurement process. In this work, we introduce a projection-based correction method to enhance the inference of deep inverse networks by ensuring consistency with the forward model. Specifically, given an initial estimate from a learned reconstruction network, we apply a projection step that constrains the solution to lie within the valid solution space of the inverse problem. We theoretically demonstrate that if the recovery model is a ``well-trained deep inverse network'', the solution can be decomposed into range-space and null-space components, where the projection-based correction reduces to an identity transformation. Extensive simulations and experiments validate the proposed method, demonstrating improved reconstruction accuracy across diverse inverse problems and deep network architectures.
\end{abstract}

\section{Introduction}
\label{sec:intro}

Linear inverse problems are fundamental in computational imaging, signal processing, and various scientific and engineering applications~\cite{ongie2020deep,bacca2023computational}. Mathematically, an inverse problem involves recovering an unknown signal $\boldsymbol{x} \in \mathbb{R}^n$ from indirect or incomplete measurements $\boldsymbol{y} \in \mathbb{R}^m$, modeled as:
\begin{equation}
    \boldsymbol{y} = \mathbf{A} \boldsymbol{x} + \boldsymbol{n},
\end{equation}
where $\mathbf{A} \in \mathbb{R}^{m \times n}$ represents the linear forward sensing operator, and $\boldsymbol{n}$ denotes additive noise. These problems are often \textit{ill-posed}, meaning that either the solution is not unique (i.e., $\text{rank}(\mathbf{A}) < n$) or small perturbations in $\boldsymbol{y}$ lead to large variations in the estimated reconstruction $\hat{\boldsymbol{x}}$.

Deep learning-based methods have emerged as powerful tools for solving inverse problems. Given a paired dataset 
$\{\boldsymbol{y}_i, \boldsymbol{x}_i\}_{i=1}^N$, deep neural networks (DNNs) aim to learn a mapping function 
$f: \mathbb{R}^m \to \mathbb{R}^n$ that estimates the reconstruction $\hat{\boldsymbol{x}} = f(\boldsymbol{y})$. 
These approaches can be broadly categorized into two main types: \textit{Pure Deep-Learning Models:} These methods rely solely on advanced image-to-image neural network architectures to refine an 
initial estimate, typically computed as $\mathbf{A}^\top\boldsymbol{y}$ or $\mathbf{A}^\dagger\boldsymbol{y}$ 
(where $\dagger$ denotes the Moore-Penrose pseudoinverse). Various architectures have been 
explored, including fully convolutional neural networks such as UNet~\cite{ronneberger2015u}, well-trained denoiser as DnCNN~\cite{zhang2017beyond}, DRUNet~\cite{zhang2021plug}, using transformer-based models as SwinLR~\cite{liang2021swinir} or Restormer~\cite{zamir2022restormer} or diffusion model such DiffUnet~\cite{choi2021ilvr}.
\textit{Physics-Guided and Hybrid Models:} These approaches integrate physical priors into the network architecture to enhance interpretability and generalization. Methods such as Plug-and-Play (PnP)~\cite{chan2016plug}, Regularization by denoising (RED)~\cite{romano2017little}  unrolled optimization networks~\cite{monga2021algorithm}, are based on optimization models, ensuring that reconstructed solutions remain consistent with known measurement processes.

However, in various applications such as computer graphics, biological measurements, image compression, controlled laboratory conditions, and cryptographic encoding, it is essential that the reconstructed signal strictly adheres to the physical constraints imposed by the forward model. Despite this requirement, many deep learning-based inverse models fail to fully satisfy these constraints, leading to reconstructions that may violate fundamental properties of the measurement process. Mathematically, deep networks must learn both the range-space $\mathcal{R}(\mathbf{A}^\top)$ and the null-space $\mathcal{N}(\mathbf{A})$ components of the inverse problem, leading to reconstructions that may not fully respect the measurement process, i.e., 
\begin{equation}
    f(\boldsymbol{y}) =\boldsymbol{x}_{\text{range}} + \boldsymbol{x}_{\text{null}}, \quad \text{where} \quad \mathbf{A} \boldsymbol{x}_{\text{null}} = 0.
\end{equation}
This decomposition highlights a critical issue: the null-space component $\boldsymbol{x}_{\text{null}}$ is unconstrained by the forward model, potentially introducing artifacts or hallucinated features in the reconstruction. Such inconsistencies reduce the reliability of deep inverse models, particularly in safety-critical applications.

To mitigate this issue, we propose a novel \textit{projection-based correction method} that refines the output of deep inverse networks by enforcing strict consistency with the measurement model. Specifically, we introduce two projection strategies based on the noise properties of the measurements. The first is the noise-free scenario, where the physical constraints must be exactly satisfied, i.e., $\boldsymbol{y} = \mathbf{A} \hat{\boldsymbol{x}}$. The second scenario accounts for additive noise, where we incorporate knowledge of the noise covariance to improve robustness. This method serves as a refinement step for any trained network, ensuring consistency with the forward model. Furthermore, we define a \textit{``well-trained deep inverse network''} as one that inherently decomposes its learned representation into range and null-space components for a given dataset. If the reconstruction network $f(\boldsymbol{y})$ satisfies this decomposition, our projection-based correction reduces to an identity transformation, preserving the network's output. Otherwise, the proposed method enhances reconstruction fidelity by eliminating physically inconsistent components. We validate our approach through extensive simulations across multiple inverse problems and deep architectures. Experimental results demonstrate significant improvements in reconstruction accuracy in low-level noise scenarios, confirming the effectiveness of the proposed projection-based correction method.

\section{Related Work}

Deep inverse models have gained significant attention for solving ill-posed inverse problems by leveraging data-driven approaches to approximate the inverse mapping of complex measurement processes, surpassing traditional iterative optimization algorithms.

\textbf{\textit{Data-Driven Inverse Models.}} Supervised deep learning approaches aim to learn an advanced neural network that maps measurements $\boldsymbol{y}$ to the underlying signal $\boldsymbol{x}$. Consequently, well-known image-to-image networks such as UNet~\cite{ronneberger2015u}, well-trained denoisers like DnCNN~\cite{zhang2017beyond} and DRUNet~\cite{zhang2021plug}, transformer-based architectures such as SwinIR~\cite{liang2021swinir} and Restormer~\cite{zamir2022restormer}, and diffusion models like DiffUnet~\cite{choi2021ilvr} have been successfully applied to refine an initial estimate, typically computed as $\mathbf{A}^\top\boldsymbol{y}$ or $\mathbf{A}^\dagger\boldsymbol{y}$. These models are trained to learn a direct mapping from corrupted measurements to high-quality reconstructions. While effective, they do not inherently satisfy the constraints of the underlying physical system, as the sensing model is only used for the initial estimation. This lack of explicit measurement consistency can lead to hallucinations or artifacts in the reconstructed images.

\textbf{\textit{Physics-Guided and Hybrid Approaches.}} To address the inconsistency issues of purely data-driven approaches, physics-guided deep learning methods have been introduced, integrating known physical models into neural network architectures~\cite{arridge2019solving, chan2016plug, romano2017little, monga2021algorithm, adler2018learned}. These approaches explicitly embed the forward model into the reconstruction process. Hybrid methods, such as deep unfolding networks~\cite{monga2021algorithm} and variational networks~\cite{shah2018solving}, integrate iterative optimization schemes into deep learning frameworks. While these methods improve data consistency, they often struggle with strictly enforcing measurement constraints, particularly in ill-posed inverse problems where significant information is lost in the null space of the forward operator.

\textbf{\textit{Null-Space Learning.}} Null-space learning has emerged as an alternative paradigm that enforces data consistency by ensuring that the learned network operates only in the null space of the forward operator~\cite{chen2020deep, sonderby2016amortised,schwab2019deep,goppel2023data,angermann2023uncertainty,wang2022zero}. Specifically, these methods employ residual architectures that constrain the learning process to focus on the null space component while guaranteeing data consistency through a custom loss function~\cite{schwab2019deep}.  For instance, ~\cite{goppel2023data,sonderby2016amortised} proposed a data-proximal null-space network that enforces data consistency, while~\cite{angermann2023uncertainty} introduced an uncertainty quantification loss function tailored for null-space networks. These \textit{Null-space learning Methods} effectively maintain consistency with the measurement process, but their reliance on costly training iterative networks can lead to suboptimal performance in noisy environments. Although methods have been employed to address noise, such as ~\cite{chen2020deep} that learning an additional residual network for the range-space, these approaches rely on structured networks that must be carefully tuned to integrate with new inversion models. In contrast, our approach presents an alternative to the traditional learning process by introducing a non-iterative projection step for any deep inverse model. This step guarantees data consistency and demonstrates robust performance in the presence of noise, overcoming one of the key limitations of current methods.

\textbf{\textit{Projection-Based Refinements.}} Projection methods have been widely used in classical inverse problems, where an initial estimate is iteratively refined by enforcing constraints~\cite{hanke1997regularizing}. In the context of deep learning, projection-based refinements have been explored in various ways, including iterative gradient refinement using CNNs~\cite{gupta2018cnn}, gradient descent optimization constrained via GANs~\cite{raj2019gan}, proximal projection steps applied to inverse problems~\cite{rick2017one,mardani2018neural} and diffusion model to constrain the solution~\cite{choi2021ilvr,daras2024survey}.  However, to the best of our knowledge, no prior work has introduced a general projection-based correction method that can be seamlessly integrated into any deep inverse network to ensure measurement consistency in a single non-iterative step. Our approach explicitly applies a projection step after the deep network inference, enforcing measurement consistency while preserving learned data priors. This method effectively reduces the null-space component, leading to more stable and accurate reconstructions across diverse inverse problems.

\section{Deep Inverse Networks}

A common approach to solving inverse problems is to leverage supervised learning techniques. Given a training dataset consisting of paired observations $(\boldsymbol{x}, \boldsymbol{y})$, a reconstruction network $f$ is trained by minimizing the following objective function
\begin{equation}
    \hat{f}\in \arg\min_f \mathbb{E}_{\boldsymbol{x},\boldsymbol{y}} \| f(\boldsymbol{y}) - \boldsymbol{x} \|_2^2,
    \label{eq:training}
\end{equation}
where the optimal estimator is the minimum mean squared error (MMSE) solution $\hat{f}(\boldsymbol{y})\approx \mathbb{E}
\{\boldsymbol{x}|\boldsymbol{y}\}$. In practice, the expectation is approximated with a finite dataset $\mathcal{X}_o$ with $N$ samples, leading to the empirical loss $\frac{1}{N}\sum_{i=1}^N \| f(\boldsymbol{y}_i) - \boldsymbol{x}_i \|_2^2$. Consequently, the learned function $\hat{f}(\boldsymbol{y})$ is influenced by the structure and diversity of $\mathcal{X}_o$, which means that the model may fail to generalize well to out-of-distribution samples. To formalize the conditions under which a reconstruction network can be considered well-trained, we introduce the following theoretical framework, which will be further explored in the next section.

\begin{definition}[Well-Trained Deep Inverse Network]
\label{defintion_well}
Let $ \hat{f}: \mathcal{Y} \to \mathcal{X} $ be a reconstruction network mapping measurements $ y \in \mathcal{Y} $ to estimates $ f(\boldsymbol{y}) \in \mathcal{X} $, where $ \boldsymbol{y} = \mathbf{A} \boldsymbol{x} $ for some linear operator $ \mathbf{A} : \mathcal{X} \to \mathcal{Y} $ and true signal $ \boldsymbol{x} \in \mathcal{X} $. The network $ \hat{f} $ is considered well-trained over a subset $ \mathcal{X}_0 \subseteq \mathcal{X} $ if, for all $ \boldsymbol{x} \in \mathcal{X}_0 $, the following conditions hold:

\begin{enumerate}
    \item The network output can be decomposed as
   \begin{equation}
   \hat{f}(\boldsymbol{y}) = \mathbf{A}^\dagger \boldsymbol{y} + \mathcal{N}(\boldsymbol{y}),
   \end{equation}
   where $ \mathbf{A}^\dagger $ is the Moore-Penrose pseudoinverse of $ \mathbf{A} $, and $ \mathcal{N}(\boldsymbol{y}) $ lies in the null space of $ \mathbf{A} $, i.e., $ \mathbf{A} \mathcal{N}(\boldsymbol{y}) = \mathbf{0} $.

   \item The term $ \mathcal{N}(\boldsymbol{y}) $ accurately represents the component of $ \boldsymbol{x} $ in the null space of $ \mathbf{A} $, such that
   \begin{equation}
   \mathcal{N}(\boldsymbol{y}) = (\mathbf{I} - \mathbf{A}^\dagger \mathbf{A}) \boldsymbol{x} \quad \text{for all} \quad \boldsymbol{x} \in \mathcal{X}_0.
   \end{equation}
\end{enumerate}
\end{definition}

Under this definition, the network $\hat{f}$ effectively captures both the range and null-space components of the true signal $\boldsymbol{x}$, ensuring accurate and consistent reconstruction. We now demonstrate that if the network is well-trained, then it is a minimizer of Eq.~\eqref{eq:training}.

\begin{proposition} Let $ \hat{f} $ be a reconstruction network satisfying the conditions of a well-trained deep inverse network defined in Definition~\ref{defintion_well}.
Then, the expected reconstruction error over the training dataset satisfies
\begin{equation}
    \mathbb{E}_{\boldsymbol{x},\boldsymbol{y}} \| \hat{f}(\boldsymbol{y}) - \boldsymbol{x} \|_2^2 = 0.
\end{equation}
Thus, $\hat{f}$ is a minimizer of \eqref{eq:training}.
\end{proposition}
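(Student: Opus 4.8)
The plan is to show that the two well-trained conditions force the pointwise identity $\hat{f}(\boldsymbol{y}) = \boldsymbol{x}$ on all of $\mathcal{X}_0$, from which the vanishing of the expected squared error is immediate. First I would invoke the noise-free measurement model $\boldsymbol{y} = \mathbf{A}\boldsymbol{x}$ assumed in Definition~\ref{defintion_well} and substitute it into the first decomposition condition, obtaining $\hat{f}(\boldsymbol{y}) = \mathbf{A}^\dagger \mathbf{A}\boldsymbol{x} + \mathcal{N}(\boldsymbol{y})$. I would then replace $\mathcal{N}(\boldsymbol{y})$ using the second condition, $\mathcal{N}(\boldsymbol{y}) = (\mathbf{I} - \mathbf{A}^\dagger \mathbf{A})\boldsymbol{x}$, which yields
\begin{equation}
\hat{f}(\boldsymbol{y}) = \mathbf{A}^\dagger \mathbf{A}\boldsymbol{x} + (\mathbf{I} - \mathbf{A}^\dagger \mathbf{A})\boldsymbol{x} = \boldsymbol{x}.
\end{equation}
The two $\mathbf{A}^\dagger \mathbf{A}\boldsymbol{x}$ contributions cancel by linearity alone, so no property of the pseudoinverse beyond the range/null-space bookkeeping is actually required: the first term supplies the range-space component $\mathbf{A}^\dagger \mathbf{A}\boldsymbol{x}$, the second supplies exactly the complementary null-space component $(\mathbf{I} - \mathbf{A}^\dagger \mathbf{A})\boldsymbol{x}$, and together they reconstruct $\boldsymbol{x}$ exactly.

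Having established $\hat{f}(\boldsymbol{y}) = \boldsymbol{x}$ for every $\boldsymbol{x}\in\mathcal{X}_0$, I would observe that the integrand $\|\hat{f}(\boldsymbol{y}) - \boldsymbol{x}\|_2^2$ is identically zero on the support of the training distribution, hence $\mathbb{E}_{\boldsymbol{x},\boldsymbol{y}}\|\hat{f}(\boldsymbol{y}) - \boldsymbol{x}\|_2^2 = 0$. Finally, since the objective in \eqref{eq:training} is a nonnegative functional of any candidate $f$, attaining the value $0$ certifies that $\hat{f}$ is a global minimizer, completing the argument.

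The proof is essentially a one-line substitution, so the real work is not a technical obstacle but a matter of stating the hypotheses cleanly. I would make explicit that the expectation is taken over the training distribution supported on $\mathcal{X}_0$, so that condition~2 is applicable to every sample appearing in the expectation, and that the relevant measurement model here is the noise-free $\boldsymbol{y} = \mathbf{A}\boldsymbol{x}$ of Definition~\ref{defintion_well} rather than the noisy model $\boldsymbol{y} = \mathbf{A}\boldsymbol{x} + \boldsymbol{n}$ introduced earlier; once these two points are pinned down, the conclusion follows immediately.
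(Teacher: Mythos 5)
Your proposal is correct and follows essentially the same route as the paper's own proof: substitute the noise-free model $\boldsymbol{y} = \mathbf{A}\boldsymbol{x}$ into the decomposition of condition~1, replace $\mathcal{N}(\boldsymbol{y})$ via condition~2 so the terms sum to $\boldsymbol{x}$, and conclude that the zero error certifies a global minimizer by nonnegativity. Your additional remarks pinning down the support of the expectation on $\mathcal{X}_0$ and the noiseless measurement model are sound clarifications, not deviations.
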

\begin{proof}
    The proof is included in Appendix A. 
\end{proof}

\begin{figure*}[!t]
    \centering
    \includegraphics[width=\linewidth]{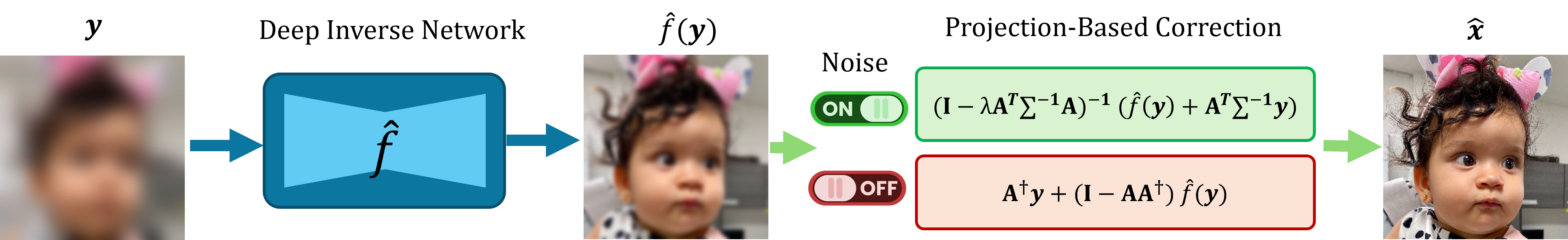}
    \caption{\textbf{Projection-Based Correction Method.} he measurement $\boldsymbol{y}$ is processed through a trained network $\hat{f}$ to obtain an initial estimation $\hat{f}(\boldsymbol{y})$. This estimation is then refined using a fixed projection operator to enhance the reconstruction while ensuring consistency with the measurement model $\hat{\boldsymbol{x}}$.}
    \label{fig:gr2r} \vspace{-1.5em}
\end{figure*}

\section{Projection-Based Correction}

Despite the optimization process described in Eq.~\ref{eq:training}, the estimated reconstruction $\hat{f}(\boldsymbol{y})$ may not fully conform to the measurement model due to limitations in the training procedure. In particular, it is often the case that $\mathbf{A}\hat{f}(\boldsymbol{y}) \neq \boldsymbol{y}$, which implies that the residual error, \( \|\mathbf{A}\hat{f}(\boldsymbol{y}) -\boldsymbol{y}\|_2^2 \) is nonzero. This discrepancy arises because the reconstruction network is trained to minimize reconstruction error rather than strictly enforce consistency with the measurement equation. Such inconsistencies are particularly noticeable in the testing dataset, where the model encounters data it has not seen during training~\footnote{It is important to note that we focus on the noiseless case, where this assumption is typically reasonable. The impact of noise is analyzed in Section ~\ref{section:Noise}.}.

To address this issue and ensure measurement consistency, we seek a corrected solution $\hat{\boldsymbol{x}}$ that satisfies the measurement equation exactly for a given $\boldsymbol{y}$. Specifically, we formulate the following constrained optimization problem:
\begin{equation}
    \min_{\boldsymbol{x}} \|\boldsymbol{x} - \hat{f}(\boldsymbol{y})\|_2^2 \quad \text{subject to} \quad \mathbf{A} \boldsymbol{x} = \boldsymbol{y}.
\end{equation}
This optimization problem guarantees that the final corrected estimate remains as close as possible to the initial reconstruction $\hat{f}(\boldsymbol{y})$ while strictly enforcing the consistency condition $\mathbf{A} \boldsymbol{x} = \boldsymbol{y}$.

The solution to this problem is obtained via a projection-based approach, resulting in the following closed-form expression:
\begin{equation}
    \hat{\boldsymbol{x}} = \mathbf{A}^\dagger \boldsymbol{y} + (\mathbf{I} - \mathbf{A}^\dagger \mathbf{A})\hat{f}(\boldsymbol{y}).
\end{equation}
This formulation decomposes the estimate into two components: (i) $\mathbf{A}^\dagger \boldsymbol{y}$, which represents the least-norm solution satisfying $\mathbf{A}\boldsymbol{x} = \boldsymbol{y}$, and (ii) a correction term that ensures $\boldsymbol{x}$ remains within the feasible subspace defined by the measurement model while preserving the structure of $\hat{f}(\boldsymbol{y})$. 

The following theorem demonstrates that this solution is indeed the optimal solution to the given constrained optimization problem.

\setcounter{theorem}{0}
\begin{theorem}
\cite{sonderby2016amortised}. Let $A \in \mathbb{R}^{m \times n}$ be a linear measurement operator, and let $\boldsymbol{y} \in \mathbb{R}^m$ be the observed measurement. Given an network estimation $\hat{f}(\boldsymbol{y}) \in \mathbb{R}^n$, the solution to the optimization problem
\begin{equation}
    \min_{\boldsymbol{x}} \|\boldsymbol{x} - \hat{f}(\boldsymbol{y})\|_2^2 \quad \text{subject to} \quad \mathbf{A} \boldsymbol{x} = \boldsymbol{y},
\end{equation}
is given by the closed-form expression:
\begin{equation}
    \boldsymbol{x}^* = \mathbf{A}^\dagger \boldsymbol{y} + (\mathbf{I} - \mathbf{A}^\dagger \mathbf{A}) \hat{f}(\boldsymbol{y}),
\end{equation}
where $\mathbf{A}^\dagger \in \mathbb{R}^{n \times m}$ is the Moore-Penrose pseudoinverse of $\mathbf{A}$.
\end{theorem}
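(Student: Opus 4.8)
The plan is to prove optimality by a direct verify-then-compare argument rather than invoking Lagrange multipliers, since the projection structure makes the geometry transparent and avoids case analysis on the rank of $\mathbf{A}$. I would first confirm feasibility of the candidate point $\boldsymbol{x}^* = \mathbf{A}^\dagger \boldsymbol{y} + (\mathbf{I} - \mathbf{A}^\dagger \mathbf{A})\hat{f}(\boldsymbol{y})$. Applying $\mathbf{A}$ and using the Moore-Penrose identity $\mathbf{A}\mathbf{A}^\dagger\mathbf{A} = \mathbf{A}$, the null-space correction term is annihilated, $\mathbf{A}(\mathbf{I} - \mathbf{A}^\dagger\mathbf{A}) = \mathbf{0}$, so $\mathbf{A}\boldsymbol{x}^* = \mathbf{A}\mathbf{A}^\dagger\boldsymbol{y}$. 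Since the setup assumes $\boldsymbol{y} = \mathbf{A}\boldsymbol{x}$, we have $\boldsymbol{y} \in \mathcal{R}(\mathbf{A})$ and therefore $\mathbf{A}\mathbf{A}^\dagger\boldsymbol{y} = \boldsymbol{y}$, establishing $\mathbf{A}\boldsymbol{x}^* = \boldsymbol{y}$.

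For optimality, I would exploit that the feasible set is the affine subspace $\{\boldsymbol{x}^* + \boldsymbol{v} : \boldsymbol{v} \in \mathcal{N}(\mathbf{A})\}$, so any competitor can be written as $\boldsymbol{x} = \boldsymbol{x}^* + \boldsymbol{v}$ with $\mathbf{A}\boldsymbol{v} = \mathbf{0}$. Expanding the objective yields $\|\boldsymbol{x} - \hat{f}(\boldsymbol{y})\|_2^2 = \|\boldsymbol{x}^* - \hat{f}(\boldsymbol{y})\|_2^2 + 2\langle \boldsymbol{x}^* - \hat{f}(\boldsymbol{y}), \boldsymbol{v}\rangle + \|\boldsymbol{v}\|_2^2$. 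The decisive step is showing the cross term vanishes: a short computation gives $\boldsymbol{x}^* - \hat{f}(\boldsymbol{y}) = \mathbf{A}^\dagger(\boldsymbol{y} - \mathbf{A}\hat{f}(\boldsymbol{y})) \in \mathcal{R}(\mathbf{A}^\dagger) = \mathcal{R}(\mathbf{A}^\top)$, while $\boldsymbol{v} \in \mathcal{N}(\mathbf{A})$; by the orthogonality of the row space and the null space, $\langle \boldsymbol{x}^* - \hat{f}(\boldsymbol{y}), \boldsymbol{v}\rangle = 0$. Hence $\|\boldsymbol{x} - \hat{f}(\boldsymbol{y})\|_2^2 = \|\boldsymbol{x}^* - \hat{f}(\boldsymbol{y})\|_2^2 + \|\boldsymbol{v}\|_2^2 \geq \|\boldsymbol{x}^* - \hat{f}(\boldsymbol{y})\|_2^2$, with equality if and only if $\boldsymbol{v} = \mathbf{0}$, which proves that $\boldsymbol{x}^*$ is the unique minimizer.

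I expect the main obstacle to be the orthogonality step, which is really the heart of the projection argument. It hinges on recognizing that $\mathbf{A}^\dagger\mathbf{A}$ is the orthogonal projector onto $\mathcal{R}(\mathbf{A}^\top)$ and that $\mathbf{I} - \mathbf{A}^\dagger\mathbf{A}$ projects onto $\mathcal{N}(\mathbf{A})$, so that the residual and the feasible displacement live in mutually orthogonal complementary subspaces. An alternative route is the Lagrangian/KKT approach: stationarity gives $\boldsymbol{x} = \hat{f}(\boldsymbol{y}) - \tfrac{1}{2}\mathbf{A}^\top\boldsymbol{\lambda}$, and substituting into $\mathbf{A}\boldsymbol{x} = \boldsymbol{y}$ recovers the same closed form after using $\mathbf{A}^\dagger = \mathbf{A}^\top(\mathbf{A}\mathbf{A}^\top)^{-1}$. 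However, this requires $\mathbf{A}$ to have full row rank for $\mathbf{A}\mathbf{A}^\top$ to be invertible, so it is less general than the projection argument, and I would keep it only as a sanity check on the final expression.
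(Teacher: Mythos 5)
Your proof is correct, and it takes a genuinely different route from the paper. The paper proves the theorem via Lagrange multipliers: it forms the Lagrangian, solves the stationarity condition for $\boldsymbol{x}$ in terms of $\boldsymbol{\lambda}$, eliminates $\boldsymbol{\lambda}$ by inverting $\mathbf{A}\mathbf{A}^\top$, and then rewrites $\mathbf{A}^\top(\mathbf{A}\mathbf{A}^\top)^{-1}$ as $\mathbf{A}^\dagger$ — exactly the alternative you relegate to a sanity check. Your verify-then-compare argument is stronger on two counts. First, generality: the paper's derivation silently requires $\mathbf{A}\mathbf{A}^\top$ to be invertible (full row rank), whereas your argument needs only $\boldsymbol{y}\in\mathcal{R}(\mathbf{A})$, which is anyway necessary for the constraint set to be nonempty, so your proof covers rank-deficient operators where the paper's identity $\mathbf{A}^\dagger=\mathbf{A}^\top(\mathbf{A}\mathbf{A}^\top)^{-1}$ does not even hold. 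Second, completeness: the paper's stationarity computation establishes only a critical point of the Lagrangian and leaves global optimality to the (unstated) convexity of the problem, while your orthogonal decomposition $\|\boldsymbol{x}-\hat{f}(\boldsymbol{y})\|_2^2=\|\boldsymbol{x}^*-\hat{f}(\boldsymbol{y})\|_2^2+\|\boldsymbol{v}\|_2^2$ proves global optimality and uniqueness in one stroke, with the key step — $\boldsymbol{x}^*-\hat{f}(\boldsymbol{y})=\mathbf{A}^\dagger(\boldsymbol{y}-\mathbf{A}\hat{f}(\boldsymbol{y}))\in\mathcal{R}(\mathbf{A}^\top)\perp\mathcal{N}(\mathbf{A})$ — correctly identified and correctly justified by the Moore--Penrose property $\mathcal{R}(\mathbf{A}^\dagger)=\mathcal{R}(\mathbf{A}^\top)$. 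What the paper's approach buys in exchange is brevity and familiarity; what yours buys is a proof valid for arbitrary $\mathbf{A}$ together with the uniqueness claim, which the paper never states.
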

\begin{proof}
     The proof is included in Appendix A. 
\end{proof}

The proposed approach can be regarded as a refinement method for the estimated image, where the projection-based correction enforces the measurement constraints. This method is particularly beneficial for addressing imperfect reconstructions from learned models, ensuring that the final estimate adheres to the underlying physical principles of the measurement process. Furthermore, we demonstrate that if the network is perfectly trained, i.e., if it is a well-trained deep inverse network, the refinement step does not alter the results, effectively acting as the identity operator $\mathbf{I}$.

\begin{theorem}
\label{theorem_2}
If the reconstruction network $\hat{f}(\boldsymbol{y})$ is well-trained and contains a deep range-null-space decomposition, then the obtained solution using the projection-based correction is equivalent to the network output, i.e., the correction does not alter the final estimation.
\end{theorem}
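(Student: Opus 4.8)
The plan is to substitute the well-trained decomposition of $\hat{f}(\boldsymbol{y})$ guaranteed by Definition~\ref{defintion_well} directly into the projection-based correction formula $\hat{\boldsymbol{x}} = \mathbf{A}^\dagger\boldsymbol{y} + (\mathbf{I} - \mathbf{A}^\dagger\mathbf{A})\hat{f}(\boldsymbol{y})$ of Theorem~1, and to show that the correction term collapses so that $\hat{\boldsymbol{x}}$ recovers $\hat{f}(\boldsymbol{y})$ exactly. The central object is the matrix $\mathbf{P} = \mathbf{I} - \mathbf{A}^\dagger\mathbf{A}$, the orthogonal projector onto the null space $\mathcal{N}(\mathbf{A})$, with complementary projector $\mathbf{A}^\dagger\mathbf{A}$ onto the row space $\mathcal{R}(\mathbf{A}^\top)$. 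The whole argument reduces to understanding how $\mathbf{P}$ acts on the two components of a well-trained output.

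First I would record the Moore-Penrose identity $\mathbf{A}^\dagger\mathbf{A}\mathbf{A}^\dagger = \mathbf{A}^\dagger$ and note that $\mathbf{A}^\dagger\mathbf{A}$ and $\mathbf{P}$ are complementary orthogonal projectors. Writing the correction as $\hat{\boldsymbol{x}} = \mathbf{A}^\dagger\boldsymbol{y} + \mathbf{P}\hat{f}(\boldsymbol{y})$, I would then invoke condition~1 of Definition~\ref{defintion_well} to replace $\hat{f}(\boldsymbol{y})$ by $\mathbf{A}^\dagger\boldsymbol{y} + \mathcal{N}(\boldsymbol{y})$, where $\mathbf{A}\mathcal{N}(\boldsymbol{y}) = \mathbf{0}$.

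The core computation is to evaluate $\mathbf{P}\hat{f}(\boldsymbol{y}) = \mathbf{P}\mathbf{A}^\dagger\boldsymbol{y} + \mathbf{P}\mathcal{N}(\boldsymbol{y})$ term by term. For the first piece, $\mathbf{P}\mathbf{A}^\dagger = (\mathbf{I} - \mathbf{A}^\dagger\mathbf{A})\mathbf{A}^\dagger = \mathbf{A}^\dagger - \mathbf{A}^\dagger = \mathbf{0}$ by the pseudoinverse identity, reflecting that the least-norm component $\mathbf{A}^\dagger\boldsymbol{y}$ lies entirely in the row space and is annihilated by the null-space projector. For the second piece, $\mathbf{A}\mathcal{N}(\boldsymbol{y}) = \mathbf{0}$ gives $\mathbf{A}^\dagger\mathbf{A}\mathcal{N}(\boldsymbol{y}) = \mathbf{0}$, so $\mathbf{P}\mathcal{N}(\boldsymbol{y}) = \mathcal{N}(\boldsymbol{y})$; the null-space component is fixed by $\mathbf{P}$. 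Combining these yields $\hat{\boldsymbol{x}} = \mathbf{A}^\dagger\boldsymbol{y} + \mathcal{N}(\boldsymbol{y}) = \hat{f}(\boldsymbol{y})$, which is precisely the assertion that the correction acts as the identity.

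The argument is essentially bookkeeping with projectors, so I do not anticipate a genuine obstacle; the only point requiring care is to use the correct pseudoinverse axiom $\mathbf{A}^\dagger\mathbf{A}\mathbf{A}^\dagger = \mathbf{A}^\dagger$ rather than the false shortcut $\mathbf{A}^\dagger\mathbf{A} = \mathbf{I}$, which fails whenever $\mathbf{A}$ is rank-deficient, i.e.\ exactly the ill-posed regime of interest. I would also make explicit that condition~2 of Definition~\ref{defintion_well} is what identifies the retained term $\mathcal{N}(\boldsymbol{y})$ with the true null-space content $(\mathbf{I} - \mathbf{A}^\dagger\mathbf{A})\boldsymbol{x}$, ensuring the identity $\hat{\boldsymbol{x}} = \hat{f}(\boldsymbol{y})$ holds for every $\boldsymbol{x} \in \mathcal{X}_0$.
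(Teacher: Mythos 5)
Your proof is correct and follows essentially the same route as the paper's: substitute the decomposition $\hat{f}(\boldsymbol{y}) = \mathbf{A}^\dagger\boldsymbol{y} + \mathcal{N}(\boldsymbol{y})$ into the correction formula, annihilate the range term via $(\mathbf{I} - \mathbf{A}^\dagger\mathbf{A})\mathbf{A}^\dagger = \mathbf{0}$, and keep the null-space term fixed via $(\mathbf{I} - \mathbf{A}^\dagger\mathbf{A})\mathcal{N}(\boldsymbol{y}) = \mathcal{N}(\boldsymbol{y})$. Your explicit warning against the false shortcut $\mathbf{A}^\dagger\mathbf{A} = \mathbf{I}$ and your justification of the second identity from $\mathbf{A}\mathcal{N}(\boldsymbol{y}) = \mathbf{0}$ are slightly more careful than the paper's presentation, but the argument is the same.
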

\begin{proof}
     The proof is included in  SM, Appendix A. 
\end{proof}

\section{Regularized Projection-Based Correction for Noisy Measurements}
\label{section:Noise}

When noise is present in the measurement model, the standard projection-based correction method no longer guarantees optimal reconstruction. Specifically, the following proposition shows that a well-trained network exhibits a bias influenced by the covariance of the noise.

\setcounter{theorem}{1}
\begin{proposition}
\label{prepo:preposition2}
Let $ \hat{f}: \mathcal{Y} \to \mathcal{X} $ be a well-trained reconstruction network satisfying the deep range-null-space decomposition. Given noisy measurements:
\begin{equation}
    \boldsymbol{y} = \mathbf{A} \boldsymbol{x} + \boldsymbol{n}, \quad \boldsymbol{n} \sim \mathcal{N}(\mathbf{0}, \boldsymbol{\Sigma}),
\end{equation}
the expected reconstruction error is given by:
\begin{equation}
    \mathbb{E} \|\boldsymbol{x} - \hat{f}(\boldsymbol{y})\|_2^2 =  \text{Tr} (\mathbf{A}^\dagger \boldsymbol{\Sigma} (\mathbf{A}^\dagger)^\top).
\end{equation}
\end{proposition}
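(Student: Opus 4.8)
The plan is to substitute the well-trained decomposition directly into the error expression and exploit the complementarity of the range and null spaces, so that every signal-dependent term cancels and only a noise-driven residual survives, whose expected squared norm is then a trace.

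First I would invoke Definition~\ref{defintion_well} on the noisy measurement $\boldsymbol{y} = \mathbf{A}\boldsymbol{x} + \boldsymbol{n}$. The well-trained network reproduces the true signal's null-space content, $(\mathbf{I} - \mathbf{A}^\dagger \mathbf{A})\boldsymbol{x}$, while its range branch applies the pseudoinverse to whatever measurement it actually receives, namely $\mathbf{A}^\dagger \boldsymbol{y}$. Expanding the latter as $\mathbf{A}^\dagger \boldsymbol{y} = \mathbf{A}^\dagger \mathbf{A}\boldsymbol{x} + \mathbf{A}^\dagger \boldsymbol{n}$ and summing the two branches yields
\begin{equation}
\hat{f}(\boldsymbol{y}) = \mathbf{A}^\dagger \mathbf{A}\boldsymbol{x} + \mathbf{A}^\dagger \boldsymbol{n} + (\mathbf{I} - \mathbf{A}^\dagger \mathbf{A})\boldsymbol{x} = \boldsymbol{x} + \mathbf{A}^\dagger \boldsymbol{n}.
\end{equation}
Hence the reconstruction error collapses to the single noise term $\boldsymbol{x} - \hat{f}(\boldsymbol{y}) = -\mathbf{A}^\dagger \boldsymbol{n}$.

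Next I would compute the expected squared norm of this residual. Writing $\|\mathbf{A}^\dagger \boldsymbol{n}\|_2^2 = \boldsymbol{n}^\top (\mathbf{A}^\dagger)^\top \mathbf{A}^\dagger \boldsymbol{n}$ and applying the scalar trace identity $\boldsymbol{n}^\top \mathbf{M}\boldsymbol{n} = \text{Tr}(\mathbf{M}\boldsymbol{n}\boldsymbol{n}^\top)$ together with linearity of expectation and $\mathbb{E}[\boldsymbol{n}\boldsymbol{n}^\top] = \boldsymbol{\Sigma}$, I obtain $\text{Tr}((\mathbf{A}^\dagger)^\top \mathbf{A}^\dagger \boldsymbol{\Sigma})$. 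The cyclic invariance of the trace then rearranges this into the claimed form $\text{Tr}(\mathbf{A}^\dagger \boldsymbol{\Sigma} (\mathbf{A}^\dagger)^\top)$, completing the computation.

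The main obstacle is interpretive rather than computational: Definition~\ref{defintion_well} is stated only for noise-free inputs $\boldsymbol{y} = \mathbf{A}\boldsymbol{x}$, so I must make explicit the modeling assumption that governs the network's behavior on the perturbed input. The cleanest reading is that a well-trained network has learned a fixed measurement-linear range branch (the pseudoinverse) plus a null-space branch that recovers the true signal's null-space content; under this reading the null-space term is unaffected by $\boldsymbol{n}$, since the noise does not alter the underlying $\boldsymbol{x}$, while only the range branch propagates the perturbation. Stating this assumption carefully is precisely what legitimizes the cancellation in the first display; everything downstream is the routine Gaussian quadratic-form evaluation.
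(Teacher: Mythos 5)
Your proof is correct and follows essentially the same route as the paper's: substitute the range--null-space decomposition with $\mathcal{N}(\boldsymbol{y}) = (\mathbf{I} - \mathbf{A}^\dagger\mathbf{A})\boldsymbol{x}$ into the noisy measurement model, cancel the signal terms to get $\boldsymbol{x} - \hat{f}(\boldsymbol{y}) = -\mathbf{A}^\dagger\boldsymbol{n}$, and evaluate the Gaussian quadratic form as a trace. If anything, you are more careful than the paper, which silently applies the noise-free Definition~\ref{defintion_well} to noisy inputs and cites ``standard properties of Gaussian noise'' where you spell out the identity $\mathbb{E}\,\boldsymbol{n}^\top\mathbf{M}\boldsymbol{n} = \mathrm{Tr}(\mathbf{M}\boldsymbol{\Sigma})$ and the cyclic rearrangement.
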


\begin{proof}
The proof is included in Appendix A.
\end{proof}

The following theorem highlights that strictly enforcing the constraint $\mathbf{A} \boldsymbol{x} = \boldsymbol{y}$ leads to overfitting to noise, thereby degrading reconstruction quality.

To enhance robustness, instead of strictly enforcing the constraint, we propose the following regularized optimization problem, which incorporates prior knowledge of the noise distribution ($\boldsymbol{\Sigma}$ covariance matrix) :
\begin{equation}
    \min_{\boldsymbol{x}} \|\boldsymbol{x} - \hat{f}(\boldsymbol{y})\|_2^2 + \lambda (\mathbf{A} \boldsymbol{x} - \boldsymbol{y})^\top \boldsymbol{\Sigma}^{-1} (\mathbf{A} \boldsymbol{x} - \boldsymbol{y}).
\end{equation}
This optimization problem can be efficiently solved using a weighted projection-based correction method.~\footnote{Iterative methods such as Conjugate Gradient can also be employed to avoid the computational cost of direct matrix inversion.} Specifically, we take the gradient of the objective function:
\begin{equation}
    \nabla_{\boldsymbol{x}} \left( \|\boldsymbol{x} - \hat{f}(\boldsymbol{y})\|_2^2 + \lambda (\mathbf{A} \boldsymbol{x} - \boldsymbol{y})^\top \boldsymbol{\Sigma}^{-1} (\mathbf{A} \boldsymbol{x} - \boldsymbol{y}) \right) = 0.
\end{equation}
Expanding and differentiating:
\begin{equation}
    2(\boldsymbol{x} - \hat{f}(\boldsymbol{y})) + 2 \lambda \mathbf{A}^\top \boldsymbol{\Sigma}^{-1} (\mathbf{A} \boldsymbol{x} - \boldsymbol{y}) = 0.
\end{equation}
Rearranging for $\boldsymbol{x}$:
\begin{equation}
    (\mathbf{I} + \lambda \mathbf{A}^\top \boldsymbol{\Sigma}^{-1} \mathbf{A}) \boldsymbol{x} = \hat{f}(\boldsymbol{y}) + \lambda \mathbf{A}^\top \boldsymbol{\Sigma}^{-1} \boldsymbol{y}.
\end{equation}
Thus, the closed-form solution is:
\begin{equation}
    \boldsymbol{x}^* = (\mathbf{I} + \lambda \mathbf{A}^\top \boldsymbol{\Sigma}^{-1} \mathbf{A})^{-1} (\hat{f}(\boldsymbol{y}) + \lambda \mathbf{A}^\top \boldsymbol{\Sigma}^{-1} \boldsymbol{y}).
    \label{eq:proposed_inversion}
\end{equation}
Interestingly, the term $(\mathbf{I} + \lambda \mathbf{A}^\top \boldsymbol{\Sigma}^{-1} \mathbf{A})^{-1}$ can be precomputed as it does not directly depend on the measurements. Therefore, these two projected terms serve as an optional refinement to ensure consistency in the measurements. If the noise covariance is unknown, we can set $\boldsymbol{\Sigma} = \mathbf{I}$, reducing the method to the well-known Tikhonov Regularization.

A visual representation of the proposed method is shown in Fig.~\ref{fig:gr2r}. The proposed approach works as a non-iterative refinement step that enhances the estimation of a deep inverse network while ensuring measurement consistency.

\begin{figure*}[t]
    \centering
    \includegraphics[width=1\linewidth]{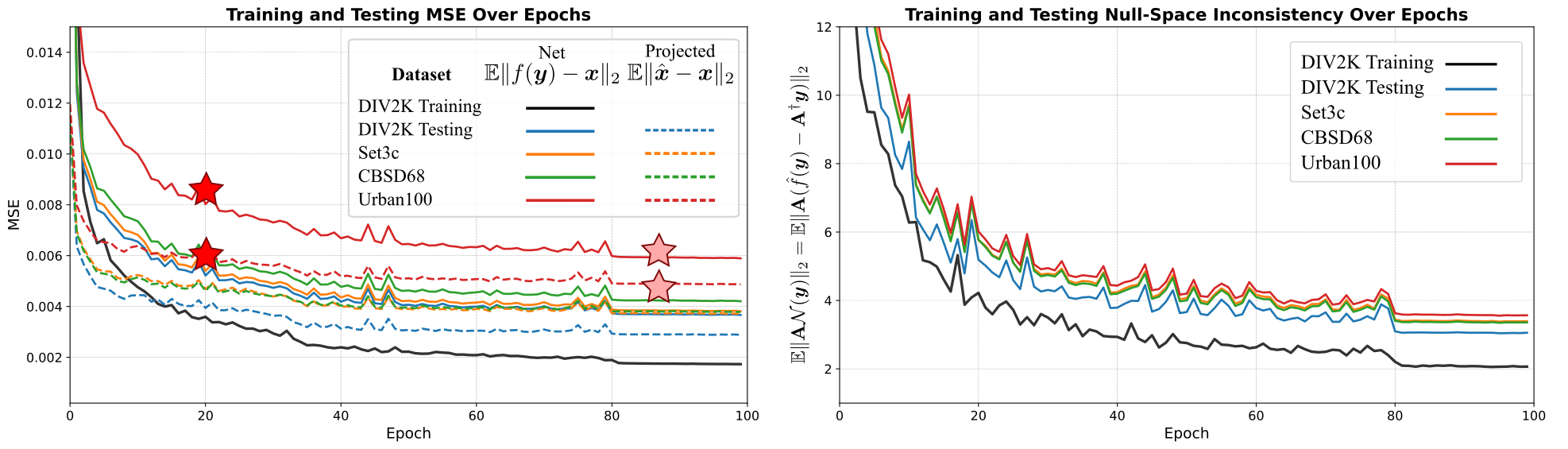}
    \caption{Training and testing comparison of (Left) MSE reconstruction error and (Right) Null-space consistency over epochs. 
    The solid line represents the network performance, while the dashed line corresponds to the proposed Projection-Based Correction.}
    \label{fig:error_curves}
\end{figure*}
\section{Simulations and Results}

In this section, we evaluate the effectiveness of our proposed Projection-Based Correction approach in enhancing deep inverse networks under both noiseless and noisy conditions. We conducted three main experiments to assess its impact. The first experiment examines the role of the Projection-Based Correction step during training by monitoring reconstruction performance over multiple epochs and verifying whether the network meets the well-trained network condition. The second experiment benchmarks the method, evaluating its generalization and robustness across various inverse problems and deep learning architectures. Finally, we assess the performance of the Projection-Based Correction on different testing datasets under varying noise levels by analyzing the hyperparameter $\lambda$. To quantify performance, we use the Peak Signal-to-Noise Ratio (PSNR) and Structural Similarity Index Measure (SSIM), which are standard metrics for image quality.

All simulations were conducted on a workstation equipped with an Intel(R) Xeon(R) W-3223 CPU @ 3.50 GHz processor, 48 GB RAM, and an NVIDIA GeForce RTX 3090 GPU with 24 GB VRAM. The implementation was developed using the DeepInverse~\cite{Tachella_DeepInverse_A_deep_2023} library, and the code is publicly available to ensure reproducibility.

\subsection{Well-Trained Network Condition}

This section analyzes the impact of the proposed Projection-Based Correction on training dynamics, particularly in ensuring that the network inherently satisfies the theoretical conditions of a well-trained inverse model. To evaluate this, we trained a model using the DIV2K dataset, with $256 \times 256$ pixel crops, consisting of 5000 training images and 100 testing images. The selected inverse problem is image deblurring in the noise-free scenario, using a Gaussian blur kernel with $\sigma = [3, 0.15]$. For testing, we evaluated four different datasets: DIV2K testing, Set3c, CBSD68, and Urban100, containing 3, 68, and 100 test images, respectively. Each test image was cropped to $256 \times 256$ for consistency.

For the reconstruction model, we employed a UNet architecture. During training, we monitored the supervised loss $\mathbb{E}\|f(\boldsymbol{y}) - \boldsymbol{x}\|_2^2$ over both the training and testing datasets to track generalization performance. Additionally, we evaluated the projection-based corrected reconstruction $\mathbb{E}\|\hat{\boldsymbol{x}} - \boldsymbol{x}\|_2^2$, where $\hat{\boldsymbol{x}}$ is the output of the proposed Projection-Based Correction method. The results are shown in Figure~\ref{fig:error_curves}. Notably, throughout training, the Projection-Based Correction method consistently outperformed the network’s raw predictions, particularly in the early training stages. As training progresses, the performance gap between the projection and the network output narrows, suggesting that the network gradually learns to reduce the null-space component. However, the network still exhibits better MSE performance on the training set than on the testing set, highlighting the regularization effect enforced by the Projection-Based Correction.
\begin{figure*}
    \centering
    \includegraphics[width=1\linewidth]{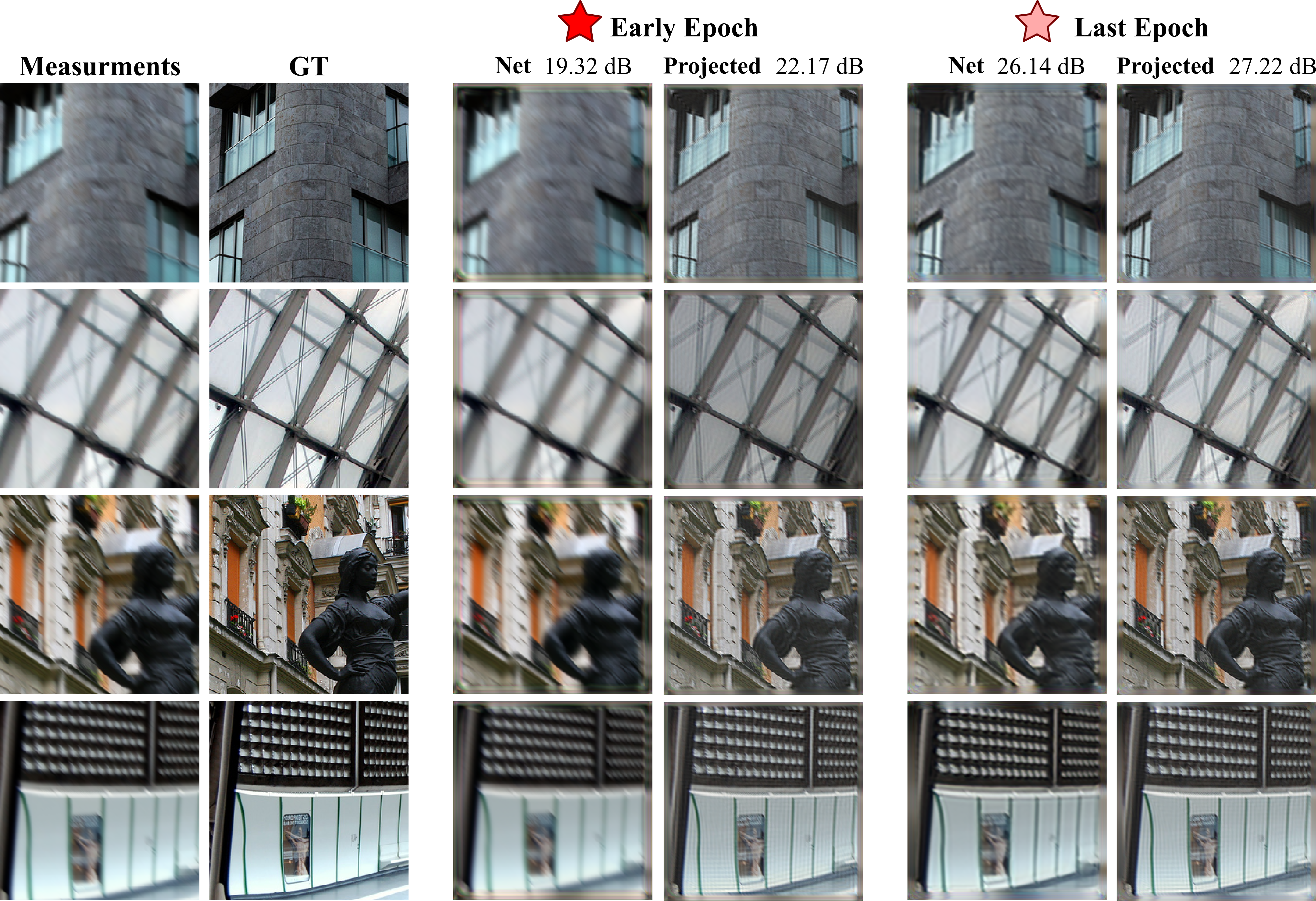}
    \caption{Visual results of the evaluated method. The leftmost column shows the measurement and the ground truth (GT). The results at an early epoch (20 epochs) and the final epoch (100 epochs) are displayed to assess the effect of the Projection-Based Correction method.}
    \label{fig:visual_results}
\end{figure*}

Furthermore, we evaluated whether the network satisfies the well-trained network condition by computing the null-space consistency term:
\begin{equation}
    \mathbb{E}\|\mathbf{A} \mathcal{N}(\boldsymbol{y})\|_2^2 = \mathbb{E}\|\mathbf{A} (\hat{f}(\boldsymbol{y}) - \mathbf{A}^\dagger \boldsymbol{y})\|_2^2.
\end{equation}
This metric quantifies how well the reconstruction remains within the range space of the forward model. The right-hand side of Figure~\ref{fig:error_curves} presents the evolution of this term across different datasets. We observe that the null-space term decreases over training, indicating that the learning process indirectly minimizes this term, supporting the validity of our hypothesis. Interestingly, the null-space error for the training dataset is lower than for the testing datasets, demonstrating that the satisfaction of the well-trained network condition is dataset-dependent. This further highlights the utility of the proposed Projection-Based Correction method in improving consistency across unseen data.

To provide visual validation, we present qualitative results in Figure~\ref{fig:visual_results}. We selected four test images from the Urban100 dataset and compared their reconstructions at an early training stage (epoch 20) and the final epoch (epoch 100). The Projection-Based Correction method significantly improves reconstruction quality at early epochs, leading to PSNR gains of up to 3 dB. Although the performance gap between the network and the projection method diminishes over training, the Projection-Based Correction still provides up to 1 dB improvement in the final reconstructions. These results confirm that enforcing measurement consistency via Projection-Based Correction leads to improved reconstruction fidelity, particularly in early training stages, where the network's implicit null-space regularization is not yet well-formed.

\subsection{Performance of the Proposed Projection-Based Correction for Different Inverse Problems}

This section evaluates the performance of the Projection-Based Correction method across three commonly encountered inverse problems: inpainting, deblurring, and single-pixel imaging (SPI). The inpainting problem considers a random binary mask with a probability of 0.5, where missing pixels must be recovered. The deblurring task involves a Gaussian blur kernel with $\sigma = [3, 0.15]$. The SPI problem is formulated using a measurement matrix with $m = 512$ projections, corresponding to a compression ratio of $0.7\%$. All the networks were trained from scratch, see suplementary material to see the performance of some inverse problem where the weights are share by the authors.

\begin{table}[!t]
\centering
\caption{Performance comparison of different reconstruction methods on the DIV2K test dataset. The best PSNR and SSIM in each row are highlighted in \textbf{bold}.}
\label{tab:SOTA_results}
\begin{spacing}{1.11}
\resizebox{0.8\linewidth}{!}{
\begin{tabular}{llcccc}
\toprule
\textbf{Problem} & \textbf{Network} & \textbf{PSNR} & \textbf{PSNR} & \textbf{SSIM} & \textbf{SSIM} \\
&  & \textbf{Net} & \textbf{Projected} & \textbf{Net} & \textbf{Projected} \\
\midrule
\multirow{4}{*}{SPI} & Unrolled  & 23.62 & \textbf{23.78} & 0.594 & \textbf{0.599} \\
                     & DnCNN    & 24.55 & \textbf{24.73} & 0.645 & \textbf{0.647} \\
                     & Restormer & 25.07 & \textbf{25.27} & 0.651 & \textbf{0.652} \\
                     & DiffUNet & 25.08 & \textbf{25.29} & 0.654 & \textbf{0.654} \\
\midrule
\multirow{4}{*}{Deblurring} 
& Unrolled     & 23.77 & \textbf{25.98} & 0.671 & \textbf{0.799} \\
& DnCNN    & 24.42 & \textbf{25.87} & 0.720 & \textbf{0.819} \\
                            & Restormer & 27.10 & \textbf{27.45} & 0.829 & \textbf{0.842} \\
                            & DiffUNet  & 27.03 & \textbf{27.34} & 0.830 & \textbf{0.841} \\
\midrule
\multirow{4}{*}{Inpainting} & Unrolled  & 27.80 & \textbf{28.54} & 0.671 & \textbf{0.703} \\
                            & DnCNN     & 28.08 & \textbf{29.41} & 0.750 & \textbf{0.816} \\
                            & Restormer & 35.31 & \textbf{35.43} & 0.945 & \textbf{0.946} \\
                            & DiffUNet  & 35.08 & \textbf{35.16} & 0.942 & \textbf{0.943} \\
\bottomrule
\end{tabular}}
\end{spacing}
\end{table}

The evaluation is performed using four deep inverse models: DnCNN~\cite{zhang2017beyond}, a CNN-based denoiser commonly employed in inverse problems; Restormer~\cite{zamir2022restormer}, a transformer-based image restoration model; DiffUnet~\cite{choi2021ilvr}, a diffusion-driven U-Net architecture; and Unrolled Optimization Networks~\cite{monga2021algorithm}, which incorporate iterative optimization steps into a deep learning framework with a Unet as prior. The training and evaluation dataset used for these experiments is DIV2K.

Table~\ref{tab:SOTA_results} presents the results of this evaluation. The performance of each inverse problem varies, with SPI demonstrating the lowest reconstruction quality and inpainting achieving the highest PSNR and SSIM values. The results consistently show that the proposed Projection-Based Correction method improves reconstruction quality across all models and inverse problems. This improvement is attributed to the enforcement of measurement consistency, which refines the learned solutions and mitigates inconsistencies introduced by deep inverse models. 

\begin{figure}[!t]
	\centering
	\includegraphics[width=0.8\linewidth]{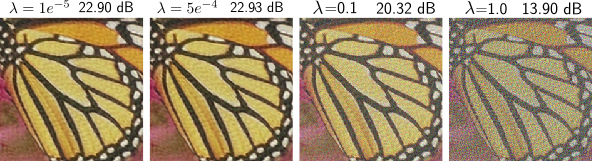}
	\caption{Effect of $\lambda$ parameter in the Projected-Based Correction Method in the present of $0.1$ of Gaussian Noise. }
	\label{fig:resutls_noise_ablaton}
\end{figure}

\begin{table}[!t]
	\centering
	
	\caption{Comparison of reconstruction performance before and after applying the proposed projection-based correction across different noise levels and datasets. }
	\label{tab:noise_results}
	\begin{spacing}{0.8}
		\resizebox{0.73\linewidth}{!}{
			\begin{tabular}{ccccccc}
				\toprule
				\textbf{Noise Level} & \textbf{Dataset} & \textbf{Best } $\lambda$ & \textbf{Method} & \textbf{PSNR} $\uparrow$ & \textbf{SSIM} $\uparrow$ \\
				\midrule
				\multirow{8}{*}{$\sigma = 0.01$}  
				& DIV2K Test & -        & Network  & 28.22 & 0.770 \\
				& DIV2K Test & 0.001    & Projected & \textbf{28.67} & \textbf{0.782} \\
				& Set3C      & -        & Network  & 23.68 & 0.798 \\
				& Set3C      & 0.010    & Projected & \textbf{24.87} & \textbf{0.816} \\
				& CBSD68     & -        & Network  & 25.04 & 0.715 \\
				& CBSD68     & 0.005    & Projected & \textbf{25.52} & \textbf{0.727} \\
				& Urban100   & -        & Network  & 22.88 & 0.653 \\
				& Urban100   & 0.005    & Projected & \textbf{23.69} & \textbf{0.677} \\
				\midrule
				\multirow{8}{*}{$\sigma = 0.05$}  
				& DIV2K Test & -        & Network  & 25.60 & 0.620 \\
				& DIV2K Test & 0.0005   & Projected & \textbf{25.61} & \textbf{0.623} \\
				& Set3C      & -        & Network  & 22.19 & 0.707 \\
				& Set3C      & 0.0005   & Projected & \textbf{23.32} & \textbf{0.718} \\
				& CBSD68     & -        & Network  & 23.88 & 0.604 \\
				& CBSD68     & 0.0001   & Projected & \textbf{23.98} & \textbf{0.614} \\
				& Urban100   & -        & Network  & 21.74 & 0.554 \\
				& Urban100   & 0.001    & Projected & \textbf{21.81} & \textbf{0.556} \\
				\midrule
				\multirow{8}{*}{$\sigma = 0.10$}  
				& DIV2K Test & -        & Network  & \underline{25.88} & \underline{0.643} \\
				& DIV2K Test & 0        & Projected & \underline{25.88} & \underline{0.643} \\
				& Set3C      & -        & Network  & 22.19 & 0.713 \\
				& Set3C      & 0.0005   & Projected & \textbf{22.92} & \textbf{0.753} \\
				& CBSD68     & -        & Network  & \underline{23.59} & \underline{0.595} \\
				& CBSD68     & 0        & Projected & \underline{23.59} & \underline{0.595} \\
				& Urban100   & -        & Network  & 21.71 & 0.545 \\
				& Urban100   & 0.0001   & Projected & \textbf{21.77} & \textbf{0.555} \\
				\midrule
				\multirow{8}{*}{$\sigma = 0.20$}  
				& DIV2K Test & -        & Network  & \underline{24.46} & \underline{0.583} \\
				& DIV2K Test & 0        & Projected & \underline{24.46} & \underline{0.583} \\
				& Set3C      & -        & Network  & 20.33 & 0.621 \\
				& Set3C      & 0.001    & Projected & \textbf{20.34} & \underline{0.621} \\
				& CBSD68     & -        & Network  & \underline{22.43} & \underline{0.528} \\
				& CBSD68     & 0        & Projected & \underline{22.43} & \underline{0.528} \\
				& Urban100   & -        & Network  & 20.64 & 0.477 \\
				& Urban100   & 0.0001   & Projected & \underline{20.64} & \underline{0.477} \\
				\midrule
				\multirow{8}{*}{$\sigma = 0.30$}  
				& DIV2K Test & -        & Network  & \underline{23.52} & \underline{0.536} \\
				& DIV2K Test & 0        & Projected & \underline{23.52} & \underline{0.536} \\
				& Set3C      & -        & Network  & 19.45 & 0.574 \\
				& Set3C      & 0.001    & Projected & \textbf{19.46} & \underline{0.574} \\
				& CBSD68     & -        & Network  & \underline{21.72} & \underline{0.484} \\
				& CBSD68     & 0        & Projected & \underline{21.72} & \underline{0.484} \\
				& Urban100   & -        & Network  & \underline{20.05} & \underline{0.435} \\
				& Urban100   & \underline{0.0001}   & Projected & \underline{20.05} & \underline{0.435} \\
				\bottomrule
		\end{tabular}}
	\end{spacing}
\end{table}

The largest performance gain is observed in the deblurring task, where the Unrolled network benefits from an increase of over 2 dB in PSNR.  In contrast, the smallest improvement occurs in the SPI task, where the projection step yields only a 0.2 dB gain. This is likely due to the extremely low compression ratio, which limits the amount of information encoded in the range-space component, thereby reducing the impact of measurement consistency enforcement. 

The improvement also depends on the intrinsic generalization ability of the network. Models such as DiffUnet, which already exhibit strong performance, show smaller performance gaps between their original output and the projected reconstruction. This effect is most pronounced in the inpainting problem, where the difference between the two methods is marginal. This observation is consistent with Theorem~\ref{theorem_2}, which establishes that when a network is well-trained, the projection-based correction converges to the network’s output, thereby reinforcing its correctness rather than altering it significantly. Additional evaluations on alternative datasets, including Set3c, CBSD68, and Urban100, are presented in the Appendix B.

\subsection{Evaluation on Out-of-Distribution Data and Varying Noise Levels}

In this section, we assess the performance of the proposed Projection-Based Correction method on out-of-distribution data across different levels of Gaussian noise. To this end, we trained the DnCNN~\cite{zhang2017beyond} network on the DIV2K dataset using the deblurring inverse problem, where the measurements were corrupted with additive Gaussian noise at varying levels, specifically $\sigma=[0.01, 0.05, 0.1, 0.2, 0.3]$. For each noise level, the network was trained for 200 epochs to ensure convergence.

\begin{figure}[!b]
    \centering
    \includegraphics[width=1\linewidth]{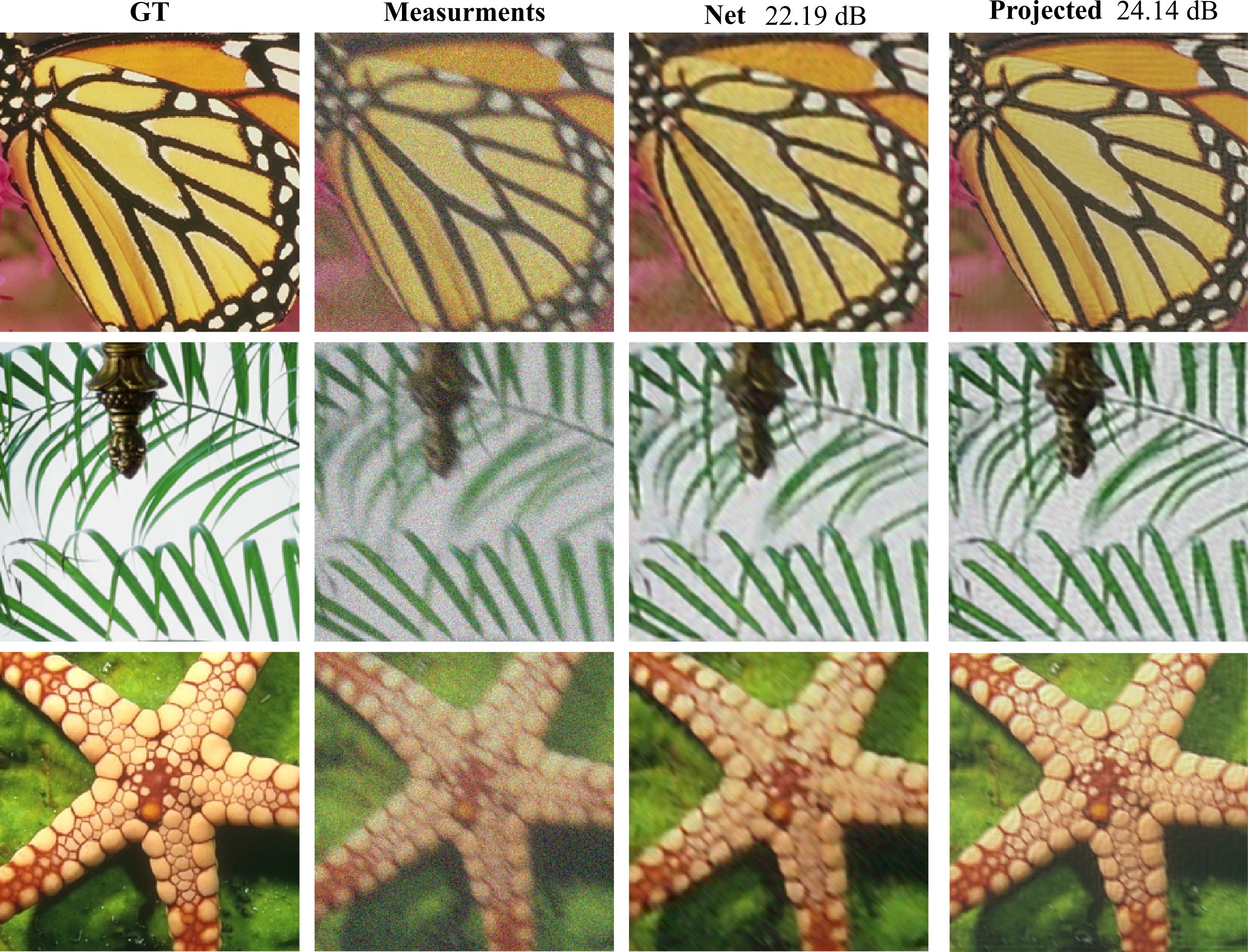}
    \caption{Visual results of the evaluated method. The leftmost column shows the measurement and the ground truth (GT). The results at an early epoch (20 epochs) and the final epoch (100 epochs) are displayed to assess the effect of the Projection-Based Correction method.}
    \label{fig:resutls_noise_final}
\end{figure}

The proposed method was applied using the projection step described in Equation~\eqref{eq:proposed_inversion}. Since the projection introduces a regularization hyperparameter $\lambda$, we performed a grid search to select the optimal value for each noise level. The role of $\lambda$ is to enforce measurement consistency; however, in the presence of noise, an excessively large $\lambda$ can degrade network estimations by amplifying noise rather than recovering structural details. A visual representation for $\sigma = 0.1$ is presented in Figure~\ref{fig:resutls_noise_ablaton} to illustrate this effect. Consequently, The best-performing $\lambda$ was used to generate the final reconstruction results, which are reported in Table~\ref{tab:noise_results}. It provides a quantitative comparison of reconstruction accuracy before and after applying the projection-based correction, reporting PSNR and SSIM metrics across different noise levels and datasets. The results consistently show an improvement, particularly in low-noise scenarios ($\sigma = [0.01, 0.05]$) and for certain datasets in moderate noise conditions ($\sigma = 0.1$). However, as suggested by Proposition \ref{prepo:preposition2}, enforcing strict measurement consistency in high-noise conditions does not necessarily lead to optimal performance. Interestingly, the proposed method exhibits greater performance gains on out-of-distribution datasets, such as Set3C and Urban100, compared to the DIV2K test set, demonstrating its robustness to domain shifts.

To further illustrate the impact of the projection-based correction, Figure~\ref{fig:resutls_noise_final} presents qualitative results for Set3C with noise level $\sigma = 0.1$. The visualizations confirm that the proposed correction method enhances fine details in deblurring tasks by improving alignment with the measurement model. The projection effectively reduces noise-induced artifacts while preserving structural information, reinforcing the importance of enforcing measurement consistency in inverse problems.



\section{Conclusion}
This work introduced a Projection-Based Correction method to enhance deep inverse networks by enforcing strict measurement consistency, addressing a key limitation of deep learning-based solvers that often violate physical constraints. Through extensive evaluations on inpainting, deblurring, and single-pixel imaging, our method consistently improved reconstruction quality across multiple architectures, including DnCNN, Restormer, DiffUnet, and PnP models, particularly in low- and moderate-noise scenarios. While the proposed approach effectively refines network outputs by mitigating null-space artifacts, its performance is limited in high-noise conditions, where enforcing measurement consistency can amplify noise. Future work should explore adaptive regularization strategies to balance measurement consistency and noise robustness.


\section*{Appendix A}

This section proof the Propositions and Theorem of the proposed method.

\subsection*{Proof of Proposition 1}
\begin{proof}
Inspiring by ~\cite{chen2021equivariant} we have that, if $ \hat{f} $ is well-trained, it satisfies the decomposition:
\begin{equation}
    \hat{f}(\boldsymbol{y}) = \mathbf{A}^\dagger \boldsymbol{y} + \mathcal{N}(\boldsymbol{y}).
\end{equation}
Using the measurement model $ \boldsymbol{y} = \mathbf{A} \boldsymbol{x} $, we substitute:
\begin{equation}
    \hat{f}(\boldsymbol{y}) = \mathbf{A}^\dagger \mathbf{A} \boldsymbol{x} + \mathcal{N}(\boldsymbol{y}).
\end{equation}
Since by definition of a well-trained network
\begin{equation}
    \mathcal{N}(\boldsymbol{y}) = (\mathbf{I} - \mathbf{A}^\dagger \mathbf{A}) \boldsymbol{x},
\end{equation}
we obtain:
\begin{equation}
    \hat{f}(\boldsymbol{y}) = \mathbf{A}^\dagger \mathbf{A} \boldsymbol{x} + (\mathbf{I} - \mathbf{A}^\dagger \mathbf{A}) \boldsymbol{x} = \boldsymbol{x}.
\end{equation}
Thus, the reconstruction is exact, implying that:
\begin{equation}
    \mathbb{E}_{\boldsymbol{x},\boldsymbol{y}} \| \hat{f}(\boldsymbol{y}) - \boldsymbol{x} \|_2^2 = \mathbb{E}_{\boldsymbol{x},\boldsymbol{y}} \| \boldsymbol{x} - \boldsymbol{x} \|_2^2 = 0.
\end{equation}
Since $ \hat{f} $ achieves the minimum possible reconstruction error, it must be a minimizer of Eq.~\eqref{eq:training}.
\end{proof}

\subsection*{Proof of Theorem 1}
\begin{proof}
The given problem is a constrained least squares problem, where we seek the point $\boldsymbol{x}^*$ closest to $\hat{f}(\boldsymbol{y})$ while satisfying the constraint $\mathbf{A}\boldsymbol{x} = \boldsymbol{y}$.

Define the Lagrangian of the problem:
\begin{equation}
    \mathcal{L}(\boldsymbol{x}, \boldsymbol{\lambda}) = \|\boldsymbol{x} - \hat{f}(\boldsymbol{y})\|_2^2 + \boldsymbol{\lambda}^\top (\mathbf{A} \boldsymbol{x} - \boldsymbol{y}),
\end{equation}
where $\boldsymbol{\lambda} \in \mathbb{R}^m$ is the vector of Lagrange multipliers enforcing the constraint.

Taking the derivative with respect to $\boldsymbol{x}$ and setting it to zero:
\begin{equation}
    \frac{\partial \mathcal{L}}{\partial \boldsymbol{x}} = 2(\boldsymbol{x} - \hat{f}(\boldsymbol{y})) + \mathbf{A}^\top \boldsymbol{\lambda} = 0.
\end{equation}
Solving for $\boldsymbol{x}$:
\begin{equation}
    \boldsymbol{x} = \hat{f}(\boldsymbol{y}) - \frac{1}{2} \mathbf{A}^\top \boldsymbol{\lambda}.
\end{equation}

sing the constraint $\mathbf{A} \boldsymbol{x} = \boldsymbol{y}$:
\begin{equation}
    \mathbf{A} \hat{f}(\boldsymbol{y}) - \frac{1}{2} \mathbf{A} \mathbf{A}^\top \boldsymbol{\lambda} = \boldsymbol{y}.
\end{equation}
Multiplying both sides by $(\mathbf{A} \mathbf{A}^\top)^{-1}$:
\begin{equation}
    \boldsymbol{\lambda} = 2 (\mathbf{A} \mathbf{A}^\top)^{-1} (\mathbf{A} \hat{f}(\boldsymbol{y}) - \boldsymbol{y}).
\end{equation}
Substituting $\boldsymbol{\lambda}$ back into the equation for $\boldsymbol{x}$:
\begin{equation}
    \boldsymbol{x}^* = \hat{f}(\boldsymbol{y}) - \mathbf{A}^\top (\mathbf{A} \mathbf{A}^\top)^{-1} (\mathbf{A} \hat{f}(\boldsymbol{y}) - \boldsymbol{y}).
\end{equation}
Using the pseudoinverse identity:
\begin{equation}
    \mathbf{A}^\dagger = \mathbf{A}^\top (\mathbf{A} \mathbf{A}^\top)^{-1},
\end{equation}
we obtain:
\begin{equation}
    \boldsymbol{x}^* = \hat{f}(\boldsymbol{y}) - \mathbf{A}^\dagger (\mathbf{A} \hat{f}(\boldsymbol{y}) - \boldsymbol{y}).
\end{equation}
Expanding:
\begin{equation}
    \boldsymbol{x}^* = \mathbf{A}^\dagger \boldsymbol{y} + (\mathbf{I} - \mathbf{A}^\dagger \mathbf{A}) \hat{f}(\boldsymbol{y}).
\end{equation}
Thus, the projected solution satisfies both the constraint and the minimization objective.
\end{proof}

\subsection*{Proof of Theorem 2}
\begin{proof}
Given the decomposition of the reconstruction network:
\begin{equation}
    \hat{f}(\boldsymbol{y}) = \mathbf{A}^\dagger \boldsymbol{y} + \mathcal{N}(\boldsymbol{y}),
\end{equation}
where $\mathcal{N}(\boldsymbol{y})$ belongs to the null space of $A$, i.e., $A \mathcal{N}(\boldsymbol{y}) = 0$. we have that the projection-based correction is given by \begin{equation}
    \boldsymbol{x}^* = \mathbf{A}^\dagger \boldsymbol{y} + (\mathbf{I} - \mathbf{A}^\dagger \mathbf{A}) (\mathbf{A}^\dagger \boldsymbol{y} + \mathcal{N}(\boldsymbol{y})).
\end{equation}
Expanding:
\begin{equation}
    \boldsymbol{x}^* = \mathbf{A}^\dagger \boldsymbol{y} + (\mathbf{I} - \mathbf{A}^\dagger \mathbf{A}) \mathbf{A}^\dagger \boldsymbol{y} + (\mathbf{I} - \mathbf{A}^\dagger \mathbf{A}) \mathcal{N}(\boldsymbol{y}).
\end{equation}
Using the property that $
    (\mathbf{I} - \mathbf{A}^\dagger \mathbf{A}) \mathbf{A}^\dagger  = \mathbf{0}$ we 
 simplify the expression to:
\begin{equation}
    \boldsymbol{x}^* = \mathbf{A}^\dagger \boldsymbol{y} + (\mathbf{I} - \mathbf{A}^\dagger \mathbf{A}) \mathcal{N}(\boldsymbol{y}).
\end{equation}
Since $\mathcal{N}(\boldsymbol{y})$ is in the null space of $\mathbf{A}$, we observe that:
\begin{equation}
    (\mathbf{I} - \mathbf{A}^\dagger\mathbf{A})\mathcal{N}(\boldsymbol{y}) = \mathcal{N}(\boldsymbol{y}),
\end{equation}
which simplifies to:
\begin{equation}
    \boldsymbol{x}^* = \mathbf{A}^\dagger \boldsymbol{y} + \mathcal{N}(\boldsymbol{y}) = \hat{f}(\boldsymbol{y}).
\end{equation}

Thus, the projection-based correction does not alter the network output when the network incorporates a deep range-null-space decomposition.
\end{proof}

\subsection*{Proof of Proposition 2}
\begin{proof}
Since the network $\hat{f}$ satisfies the well-trained decomposition:
\begin{equation}
    \hat{f}(\boldsymbol{y}) = \mathbf{A}^\dagger \boldsymbol{y} + \mathcal{N}(\boldsymbol{y}),
\end{equation}
where $\mathcal{N}(\boldsymbol{y}) = (\mathbf{I} - \mathbf{A}^\dagger \mathbf{A}) \boldsymbol{x}$ represents the null-space component, we substitute the noisy measurement model:
\begin{equation}
    \hat{f}(\boldsymbol{y}) = \mathbf{A}^\dagger (\mathbf{A} \boldsymbol{x} + \boldsymbol{n}) + \mathcal{N}(\boldsymbol{y}).
\end{equation}
Expanding the terms:
\begin{equation}
    \hat{f}(\boldsymbol{y}) = \mathbf{A}^\dagger \mathbf{A} \boldsymbol{x} + \mathbf{A}^\dagger \boldsymbol{n} + (\mathbf{I} - \mathbf{A}^\dagger \mathbf{A}) \boldsymbol{x}.
\end{equation}
Rearranging correctly:
\begin{equation}
    \hat{f}(\boldsymbol{y}) = \boldsymbol{x} + \mathbf{A}^\dagger \boldsymbol{n}.
\end{equation}
Thus, the reconstruction error is:
\begin{equation}
    \boldsymbol{x} - \hat{f}(\boldsymbol{y}) = - \mathbf{A}^\dagger \boldsymbol{n}.
\end{equation}
Taking the expectation of the squared norm:
\begin{equation}
    \mathbb{E} \|\boldsymbol{x} - \hat{f}(\boldsymbol{y})\|_2^2 = \mathbb{E} \| \mathbf{A}^\dagger \boldsymbol{n} \|_2^2.
\end{equation}
From standard properties of Gaussian noise:
\begin{equation}
    \mathbb{E} \| \mathbf{A}^\dagger \boldsymbol{n} \|_2^2 = \text{Tr} (\mathbf{A}^\dagger \boldsymbol{\Sigma} (\mathbf{A}^\dagger)^\top),
\end{equation}
where $\boldsymbol{\Sigma}$ is covariance matrix of the noise. Thus, we obtain 
\begin{equation}
    \mathbb{E} \|\boldsymbol{x} - \hat{f}(\boldsymbol{y})\|_2^2 =  \text{Tr} (\mathbf{A}^\dagger \boldsymbol{\Sigma} (\mathbf{A}^\dagger)^\top).
\end{equation}
\end{proof}

\section*{Appendix B}
\begin{table}[!b]
\renewcommand{\arraystretch}{1.25}
\centering
\caption{Reconstruction Quality Across $10$ KAIST Testing Scenes.}
\resizebox{1\columnwidth}{!}{%
\setlength\tabcolsep{0.5cm}
\begin{tabular}{|c|cc|cc|}
\hline
\multirow{2}{*}{\textbf{Method}} & \multicolumn{2}{c|}{\textbf{PSNR {[}dB{]}}} & \multicolumn{2}{c|}{\textbf{SSIM}} \\ 
\cline{2-5} 
& \multicolumn{1}{c|}{\textbf{DGSMP}} & \textbf{MMU} & \multicolumn{1}{c|}{\textbf{DGSMP}} & \textbf{MMU} \\ 
\hline
\textbf{Baseline}  & \multicolumn{1}{c|}{30.28 $\pm$ 3.09} & 31.85 $\pm$ 2.89  & \multicolumn{1}{c|}{0.92 $\pm$ 0.02} & 0.93 $\pm$ 0.01  \\  
\textbf{Projected} & \multicolumn{1}{c|}{\textbf{33.45 $\pm$ 3.17}} & \textbf{32.17 $\pm$ 2.62}  & \multicolumn{1}{c|}{\textbf{0.94 $\pm$ 0.01}} & \textbf{0.95 $\pm$ 0.02}  \\  
\hline
\end{tabular}
}
\label{tab:projected_results}
\end{table}

\begin{table}[!t]
\centering
\caption{Performance comparison of different reconstruction methods across inverse problems and datasets. The best PSNR and SSIM in each row are highlighted in \textbf{bold}.}
\label{tab:results_more}
\begin{spacing}{0.8}
\resizebox{\linewidth}{!}{
\begin{tabular}{lllcccc}
\toprule
\textbf{Problem} & \textbf{Dataset} & \textbf{Network} & \textbf{PSNR} & \textbf{PSNR} & \textbf{SSIM} & \textbf{SSIM} \\
&  &  & \textbf{Net} & \textbf{Projected} & \textbf{Net} & \textbf{Projected} \\
\midrule
\multirow{16}{*}{SPI}  
                     & \multirow{4}{*}{Set3c}    & Unrolled  & 17.72 & \textbf{17.77} & 0.480 & \textbf{0.480} \\
                     &                           & DnCNN     & 18.13 & \textbf{18.23} & 0.544 & \textbf{0.540} \\
                     &                           & Restormer & 18.72 & \textbf{18.88} & 0.591 & \textbf{0.586} \\
                     &                           & DiffUNet  & 18.78 & \textbf{18.95} & 0.596 & \textbf{0.592} \\
\cmidrule{2-7}
                     & \multirow{4}{*}{CBSD68}  & Unrolled  & 21.43 & \textbf{21.49} & 0.531 & \textbf{0.532} \\
                     &                           & DnCNN     & 21.89 & \textbf{21.99} & 0.566 & \textbf{0.567} \\
                     &                           & Restormer & 21.78 & \textbf{21.93} & 0.549 & \textbf{0.550} \\
                     &                           & DiffUNet  & 21.87 & \textbf{22.03} & 0.553 & \textbf{0.554} \\
\cmidrule{2-7}
                     & \multirow{4}{*}{Urban100} & Unrolled  & 20.18 & \textbf{20.27} & 0.507 & \textbf{0.512} \\
                     &                           & DnCNN     & 20.65 & \textbf{20.75} & 0.559 & \textbf{0.558} \\
                     &                           & Restormer & 21.29 & \textbf{21.44} & 0.597 & \textbf{0.597} \\
                     &                           & DiffUNet  & 21.38 & \textbf{21.54} & 0.601 & \textbf{0.601} \\
\midrule
\multirow{12}{*}{Deblurring} 
                     & \multirow{4}{*}{Set3c}    & Unrolled  & 23.77 & \textbf{25.98} & 0.671 & \textbf{0.799} \\
                     &                           & DnCNN     & 24.42 & \textbf{25.87} & 0.720 & \textbf{0.819} \\
                     &                           & Restormer & 27.10 & \textbf{27.45} & 0.829 & \textbf{0.842} \\
                     &                           & DiffUNet  & 27.03 & \textbf{27.34} & 0.830 & \textbf{0.841} \\
\cmidrule{2-7}
                     & \multirow{4}{*}{CBSD68}  & Unrolled  & 22.39 & \textbf{22.81} & 0.832 & \textbf{0.841} \\
                     &                           & DnCNN     & 23.07 & \textbf{24.56} & 0.667 & \textbf{0.775} \\
                     &                           & Restormer & 24.71 & \textbf{24.99} & 0.775 & \textbf{0.788} \\
                     &                           & DiffUNet  & 24.71 & \textbf{24.95} & 0.775 & \textbf{0.786} \\
\cmidrule{2-7}
                     & \multirow{4}{*}{Urban100} & Unrolled  & 22.92 & \textbf{23.24} & 0.757 & \textbf{0.774} \\
                     &                           & DnCNN     & 20.73 & \textbf{22.21} & 0.603 & \textbf{0.723} \\
                     &                           & Restormer & 27.10 & \textbf{27.45} & 0.829 & \textbf{0.842} \\
                     &                           & DiffUNet  & 22.81 & \textbf{23.08} & 0.754 & \textbf{0.769} \\
\midrule
\multirow{12}{*}{Inpainting} 
                     & \multirow{4}{*}{Set3c}    & Unrolled  & 27.80 & \textbf{28.54} & 0.671 & \textbf{0.703} \\
                     &                           & DnCNN     & 28.08 & \textbf{29.41} & 0.750 & \textbf{0.816} \\
                     &                           & Restormer & 35.31 & \textbf{35.43} & 0.945 & \textbf{0.946} \\
                     &                           & DiffUNet  & 35.08 & \textbf{35.16} & 0.942 & \textbf{0.943} \\
\cmidrule{2-7}
                     & \multirow{4}{*}{CBSD68}  & Unrolled  & 31.43 & \textbf{31.46} & 0.964 & \textbf{0.964} \\
                     &                           & DnCNN     & 26.52 & \textbf{27.53} & 0.772 & \textbf{0.827} \\
                     &                           & Restormer & 30.34 & \textbf{30.38} & 0.916 & \textbf{0.917} \\
                     &                           & DiffUNet  & 30.07 & \textbf{30.09} & 0.910 & \textbf{0.911} \\
\cmidrule{2-7}
                     & \multirow{4}{*}{Urban100} & Unrolled  & 30.64 & \textbf{30.79} & 0.930 & \textbf{0.933} \\
                     &                           & DnCNN     & 24.32 & \textbf{25.50} & 0.731 & \textbf{0.805} \\
                     &                           & Restormer & 30.99 & \textbf{31.16} & 0.934 & \textbf{0.937} \\
                     &                           & DiffUNet  & 30.64 & \textbf{30.79} & 0.930 & \textbf{0.933} \\
\bottomrule
\end{tabular}}
\end{spacing}
\end{table}

\label{Appendix_B}

This section shows additional results for different testing datasets of experiment 6.2 of the main document, summarized in Table~\ref{tab:results_more}.

\subsection{Compressive Spectral Imaging}

We selected two pre-trained networks provided by the original authors for the Compressive Spectral Imaging inverse problem. Specifically, we utilized Deep Gaussian Scale Mixture Prior~\cite{huang2021deep}, and the \textbf{Mask Modeling Uncertainty (MMU)}~\cite{wang2022modeling}. The Kaist dataset was used for testing. The results of the proposed Projected Method are summarized in the Table~\ref{tab:projected_results}, where we validate its effectiveness against state-of-the-art approaches that were not trained from scratch. Notably, our method consistently achieves competitive performance, highlighting its robustness and adaptability across different reconstruction scenarios.

\break
\section*{Reference }

\bibliography{JPCSLaTeXGuidelines}

\providecommand{\newblock}{}
\begin{thebibliography}{10}
\expandafter\ifx\csname url\endcsname\relax
  \def\url#1{{\tt #1}}\fi
\expandafter\ifx\csname urlprefix\endcsname\relax\def\urlprefix{URL }\fi
\providecommand{\eprint}[2][]{\url{#2}}

\bibitem{ongie2020deep}
Ongie G, Jalal A, Metzler C~A, Baraniuk R~G, Dimakis A~G and Willett R 2020
  {\em IEEE Journal on Selected Areas in Information Theory\/} {\bf 1} 39--56

\bibitem{bacca2023computational}
Bacca J, Martinez E and Arguello H 2023 {\em JOSA A\/} {\bf 40} C115--C125

\bibitem{ronneberger2015u}
Ronneberger O, Fischer P and Brox T 2015 U-net: Convolutional networks for
  biomedical image segmentation {\em Medical image computing and
  computer-assisted intervention--MICCAI 2015: 18th international conference,
  Munich, Germany, October 5-9, 2015, proceedings, part III 18\/} (Springer) pp
  234--241

\bibitem{zhang2017beyond}
Zhang K, Zuo W, Chen Y, Meng D and Zhang L 2017 {\em IEEE transactions on image
  processing\/} {\bf 26} 3142--3155

\bibitem{zhang2021plug}
Zhang K, Li Y, Zuo W, Zhang L, Van~Gool L and Timofte R 2021 {\em IEEE
  Transactions on Pattern Analysis and Machine Intelligence\/} {\bf 44}
  6360--6376

\bibitem{liang2021swinir}
Liang J, Cao J, Sun G, Zhang K, Van~Gool L and Timofte R 2021 Swinir: Image
  restoration using swin transformer {\em Proceedings of the IEEE/CVF
  international conference on computer vision\/} pp 1833--1844

\bibitem{zamir2022restormer}
Zamir S~W, Arora A, Khan S, Hayat M, Khan F~S and Yang M~H 2022 Restormer:
  Efficient transformer for high-resolution image restoration {\em Proceedings
  of the IEEE/CVF conference on computer vision and pattern recognition\/} pp
  5728--5739

\bibitem{choi2021ilvr}
Choi J, Kim S, Jeong Y, Gwon Y and Yoon S 2021 {\em arXiv preprint
  arXiv:2108.02938\/}

\bibitem{chan2016plug}
Chan S~H, Wang X and Elgendy O~A 2016 {\em IEEE Transactions on Computational
  Imaging\/} {\bf 3} 84--98

\bibitem{romano2017little}
Romano Y, Elad M and Milanfar P 2017 {\em SIAM Journal on Imaging Sciences\/}
  {\bf 10} 1804--1844

\bibitem{monga2021algorithm}
Monga V, Li Y and Eldar Y~C 2021 {\em IEEE Signal Processing Magazine\/} {\bf
  38} 18--44

\bibitem{arridge2019solving}
Arridge S, Maass P, {\"O}ktem O and Sch{\"o}nlieb C~B 2019 {\em Acta
  Numerica\/} {\bf 28} 1--174

\bibitem{adler2018learned}
Adler J and {\"O}ktem O 2018 {\em IEEE transactions on medical imaging\/} {\bf
  37} 1322--1332

\bibitem{shah2018solving}
Shah V and Hegde C 2018 Solving linear inverse problems using gan priors: An
  algorithm with provable guarantees {\em 2018 IEEE international conference on
  acoustics, speech and signal processing (ICASSP)\/} (IEEE) pp 4609--4613

\bibitem{chen2020deep}
Chen D and Davies M~E 2020 Deep decomposition learning for inverse imaging
  problems {\em Computer Vision--ECCV 2020: 16th European Conference, Glasgow,
  UK, August 23--28, 2020, Proceedings, Part XXVIII 16\/} (Springer) pp
  510--526

\bibitem{sonderby2016amortised}
S{\o}nderby C~K, Caballero J, Theis L, Shi W and Husz{\'a}r F 2016 {\em arXiv
  preprint arXiv:1610.04490\/}

\bibitem{schwab2019deep}
Schwab J, Antholzer S and Haltmeier M 2019 {\em Inverse Problems\/} {\bf 35}
  025008

\bibitem{goppel2023data}
G{\"o}ppel S, Frikel J and Haltmeier M 2023 {\em arXiv preprint
  arXiv:2309.06573\/}

\bibitem{angermann2023uncertainty}
Angermann C, G{\"o}ppel S and Haltmeier M 2023 {\em arXiv preprint
  arXiv:2304.06955\/}

\bibitem{wang2022zero}
Wang Y, Yu J and Zhang J 2022 {\em arXiv preprint arXiv:2212.00490\/}

\bibitem{hanke1997regularizing}
Hanke M 1997 {\em Inverse problems\/} {\bf 13} 79

\bibitem{gupta2018cnn}
Gupta H, Jin K~H, Nguyen H~Q, McCann M~T and Unser M 2018 {\em IEEE
  transactions on medical imaging\/} {\bf 37} 1440--1453

\bibitem{raj2019gan}
Raj A, Li Y and Bresler Y 2019 Gan-based projector for faster recovery with
  convergence guarantees in linear inverse problems {\em Proceedings of the
  IEEE/CVF international conference on computer vision\/} pp 5602--5611

\bibitem{rick2017one}
Rick~Chang J, Li C~L, Poczos B, Vijaya~Kumar B and Sankaranarayanan A~C 2017
  One network to solve them all--solving linear inverse problems using deep
  projection models {\em Proceedings of the IEEE International Conference on
  Computer Vision\/} pp 5888--5897

\bibitem{mardani2018neural}
Mardani M, Sun Q, Donoho D, Papyan V, Monajemi H, Vasanawala S and Pauly J 2018
  {\em Advances in Neural Information Processing Systems\/} {\bf 31}

\bibitem{daras2024survey}
Daras G, Chung H, Lai C~H, Mitsufuji Y, Ye J~C, Milanfar P, Dimakis A~G and
  Delbracio M 2024 {\em arXiv preprint arXiv:2410.00083\/}

\bibitem{Tachella_DeepInverse_A_deep_2023}
Tachella J, Chen D, Hurault S, Terris M and Wang A 2023 {DeepInverse: A deep
  learning framework for inverse problems in imaging}
  \urlprefix\url{https://github.com/deepinv/deepinv}

\bibitem{chen2021equivariant}
Chen D, Tachella J and Davies M~E 2021 Equivariant imaging: Learning beyond the
  range space {\em Proceedings of the IEEE/CVF International Conference on
  Computer Vision\/} pp 4379--4388

\bibitem{huang2021deep}
Huang T, Dong W, Yuan X, Wu J and Shi G 2021 Deep gaussian scale mixture prior
  for spectral compressive imaging {\em Proceedings of the IEEE/CVF Conference
  on Computer Vision and Pattern Recognition\/} pp 16216--16225

\bibitem{wang2022modeling}
Wang J, Zhang Y, Yuan X, Meng Z and Tao Z 2022 Modeling mask uncertainty in
  hyperspectral image reconstruction {\em European Conference on Computer
  Vision\/} (Springer) pp 112--129

\end{thebibliography}

\end{document}